\documentclass[11pt,a4paper]{article}
\usepackage[utf8]{inputenc}
\usepackage[T1]{fontenc}
\usepackage{lmodern}
\usepackage{graphicx}
\usepackage{float} % Support the [H] placement option used by the four figures
\usepackage{epstopdf}
\usepackage{amsmath,amssymb,amsfonts,amsthm}
\usepackage{bm}
\usepackage{physics} % original \norm / \pdv macros
\usepackage{multirow}
\usepackage[normalem]{ulem}
\usepackage{cite}
\usepackage[margin=1in]{geometry}
\usepackage{xcolor}
\usepackage{hyperref}
\hypersetup{colorlinks=true,linkcolor=blue,citecolor=blue,urlcolor=blue}
\newtheorem{proposition}{Proposition}
\newtheorem{assumption}{Assumption}
\newtheorem{definition}{Definition}
\def\1{\mbox{\boldmath$1$}}

\def\B{\mbox{\boldmath$B$}}
\def\C{\mbox{\boldmath$C$}}
\def\D{\mbox{\boldmath$D$}}

\def\F{\mbox{\boldmath$F$}}

\def\H{\mbox{\boldmath$H$}}
\def\L{\mbox{\boldmath$L$}}
\def\I{\mbox{\boldmath$I$}}

\def\M{\mbox{\boldmath$M$}}
\def\N{\mbox{\boldmath$N$}}

\def\P{\mbox{\boldmath$P$}}
\def\Q{\mbox{\boldmath$Q$}}
\def\R{\mbox{\boldmath$R$}}
\def\S{\mbox{\boldmath$S$}}

\def\W{\mbox{\boldmath$W$}}
\def\X{\mbox{\boldmath$X$}}
\def\Y{\mbox{\boldmath$Y$}}

\newcommand{\LambdaB}{\bm{\Lambda}}

\def\b{\mbox{\boldmath$b$}}
\def\c{\mbox{\boldmath$c$}}

\def\f{\mbox{\boldmath$f$}}
\def\g{\mbox{\boldmath$g$}}
\def\h{\mbox{\boldmath$h$}}

\def\q{\mbox{\boldmath$q$}}
\def\r{\mbox{\boldmath$r$}}

\def\u{\mbox{\boldmath$u$}}
\def\v{\mbox{\boldmath$v$}}

\def\x{\mbox{\boldmath$x$}}
\def\y{\mbox{\boldmath$y$}}

\def\u{\mbox{\boldmath$u$}}
\def\taub{\mbox{\boldmath$\tau$}}
\def\0{\mbox{\boldmath$0$}}

\def\Nu{\bm{\nu}}

\newcommand*\proofdeltaVtitle{PROOF OF EQ.~\eqref{eq:deltaV}}
\newcommand*\proofSlopeTitle{PROOF OF EQ.~\eqref{eq:slope0}}

\title{Data-driven discrete-time deep neural network-based modeling for dissipative systems}
\author{Tuan Luong\thanks{Department of Mechanical Engineering, Sungkyunkwan University, Suwon 16419, South Korea. E-mail: \texttt{luongtuan@g.skku.edu}.}
\and Hyungpil Moon\thanks{Department of Mechanical Engineering, Sungkyunkwan University, Suwon 16419, South Korea. E-mail: \texttt{hyungpil@g.skku.edu}. Corresponding author.}}
\date{}

\begin{document}
\maketitle

\begin{abstract}
Physical AI has gained increasing attention for its role in developing AI systems that better understand, predict, and control real-world dynamics. Achieving this requires AI models that not only achieve high prediction accuracy but also preserve fundamental physical properties of dynamical systems. In this paper, we propose a deep discrete-time dissipative recurrent neural network (DissipNet) that explicitly enforces dissipativity, a key property related to stability and energy dissipation, through structural weight constraints and a dedicated training algorithm. By construction, the proposed network is capable of learning dissipative dynamics while preserving their inherent stability, which is formally analyzed using Lyapunov theory. In contrast to Physics-Informed Neural Networks (PINNs), which incorporate governing equations into the training loss but do not guarantee preservation of internal analytical properties such as dissipativity or passivity, our approach provides explicit guarantees on stability at the model level. We demonstrate the effectiveness of the proposed method through several modeling applications, and compare its performance with a naive recurrent neural network (RNN) and a PINN-based model.
\end{abstract}

\noindent\textbf{Keywords:} Physical AI, dissipative neural networks, dissipativity, stable system modeling, Lyapunov stability

\section{Introduction}
Neural Networks (NN) have spread as a powerful tool for multiple purposes in many fields such as signal processing, modelling, and control of dynamical systems. Many types of NNs, Multi-Layer Perceptron (MLP), Vanilla Recurrent Neural Networks (RNNs), Long-Short Term Memory (LSTMs), Convolutional Neural Networks (CNNs), Generative Models, etc., and their hybrid models~\cite{hagan}, have been proposed for different applications. However, in most of the work, NNs are considered only as a nonlinear system that fit a collection of data, which have no information about the physical characteristics. In general, a neural network does not preserve the physical properties of a real system. For example, the modeling of a dynamic system using a traditional data-based network model might result in a model that violates physical constraints~\cite{luong2021long}. Preserving physical properties of network models, such as stability properties, is therefore an important research topic to be investigated.

Amongst physical properties, dissipativity is a relevant issue that has been widely used to analyse the stability of linear and nonlinear systems. The dissipativity theory has found successful applications in various areas such as signal processing~\cite{xie1998passivity}, human-robot interaction~\cite{landi2018passivity}, impedance control~\cite{albu2007unified}, disturbance estimation and control of aircraft~\cite{slightam2021passivity}, mobile robot~\cite{ren2016passivity}. It may deal with nonlinear systems using only the general characteristics of the input–output dynamics and offers elegant solutions for the proof of absolute stability. Dissipative systems exhibit many desirable properties, for example, 1) the free dynamics and zero dynamics of Dissipative systems are Lyapunov stable; 2) the parallel and negative feedback interconnections of passive systems remain dissipative. This property allows the independent modelling and control of separate dissipative systems; and 3) a dissipative system can achieve stability using output feedback~\cite{liao2003global}. The property can be applied to design a controller for the systems that are unstable. In fact, there are many engineering systems (e.g., high-performance aircraft or even some more exotic circuits) which are even designed to be unstable for some range of initial conditions in order to take advantages of improved performance for other ranges of initial data. For those systems, an input feedforward dissipative (IFP) or an output feedback passivity (OFP) can be used to render the process dissipative~\cite{khalil2002nonlinear}.

Due to the importance of dissipative properties, effort has been made to study dissipavivity and passivity (related property) of neural networks. The passivity properties of static multilayer neural networks are examined in~\cite{commuri1997cmac}. By means of analysing the interconnection of error models, the study derived the relationship between passivity and closed-loop stability. Passivity properties of dynamic neural networks can be found in~\cite{yu2001some}. In~\cite{yu2003passivity}, dynamic multilayer neural networks are used for nonlinear system on-line identification. The passivity approach is applied to access several stability properties of the neuro identifier. The passivity conditions for delayed neural networks (DNNs) are considered in~\cite{li2005passivity}, where the passivity conditions for DNNs without uncertainties were derived, and then extend the results to the case with time-varying parametric uncertainties. The passivity conditions for DNNs with uncertainties are considered in~\cite{park2007further}. In~\cite{liao2003global}, the global dissipativity of a general class of continuous-time recurrent neural networks are addressed. It is worth pointing out that although some passivity analysis has been done, they require the solving of linear matrix equations which face challenges when scaling to large networks. Some recent studies on learning dissipative neural dynamics have been restricted to specific port-Hamiltonian network structures~\cite{drgovna2022dissipative}. However, these models are only applicable when the system inputs remain constant. In~\cite{xu2023learning}, the incremental dissipativity of a system with external inputs is considered. However, the approach does not consider the state model and requires multi-step learning process. In~\cite{okamoto2025learning}, a continuous dissipative Neural ODE model was obtained by projecting the original ODE onto a space that guarantees dissipativity. However, while the training process was time-consuming, a degradation in modeling accuracy may have occurred during the projection step.

In this paper, we aim to develop a discrete-time dissipative recurrent neural network (DissipNet) for learning a discrete model of a nonlinear system that is known to be dissipative. Our contribution includes the proposition of a linear matrix inequality condition for the network to ensure its dissipative property. We also contribute a sufficient learning algorithm that uses simple unconstrained optimization to determine weights that satisfy these conditions, which is simpler and provide more flexibility compared with ~\cite{xu2023learning}. The network’s stability is proven using Lyapunov theory. The integration of physical meaning into a generic network aligns with an emerging research topic known as physics-informed neural network (PINN)~\cite{cuomo2022scientific}. The underlying approaches are that PINN encode model equations, like Partial Differential Equations (PDE), as a component of the neural network itself. In this approach, NN must fit observed data while reducing a minimize a loss function, which includes reflecting the initial and boundary conditions along the space-time domain’s boundary and the PDE residual. It has been demonstrated that the PINN can efficiently leverage the predictive ability of neural networks PINNs addressing problems that are described by few data, or with noisy experiment observations. The work aims to learn the equations and replace them by NNs. However, there is no guarantee that the NN will preserve the equation’s internal analytical properties, for example, passivity in case of robotics, as opposed to our proposal.

The remaining of this paper is organized as follows. 
Section~\ref{sec:nnmodel} presents the dissipative neural network model and its learning algorithm. 
Case studies demonstrating the performance of the proposed network and comparisons with a PINN model and a Naive RNN model are presented in Section~\ref{sec:casestudy}. 
We conclude the paper in Section~\ref{sec:conclusion}.

\section{Dissipative Recurrent Neural Networks} \label{sec:nnmodel}
In this section, the problem under research will be described, and  the proposed network and the conditions for it to be dissipative will be given. Then, an algorithm on how the neural network will be trained will be presented.
\subsection{Problem formulation}
In this work, we consider the discrete nonlinear systems of the form
\begin{equation} \label{eq:ss1}
    \x_{k+1} = \f(\x_{k}, \u_{k})
\end{equation}
\begin{equation} \label{eq:ss2}
    \y_{k} = \g(\x_{k}, \u_{k})
\end{equation}
where $\x_{k} \in \mathbb{R}^{n \times n} , \u_{k} \in \mathbb{R}^{m \times 1}, \y_{k} \in \mathbb{R}^{p \times 1}$ are states, inputs and outputs of the system at time step $k$. $\f(0) = 0$, $\g(0) = 0 $.
\begin{definition}
    Definition 1: The system in Eq.(~\ref{eq:ss1}) and Eq.(~\ref{eq:ss2}) is said to be $(\Q, \S, \R)$ dissipative, where $0 \succeq  \Q \in \mathbb{R}^{p \times p}$, $\S \in \mathbb{R}^{m \times p}$ and $\R = \R^T \in \mathbb{R}^{m \times m}$  if there exists a Lyapunov function $V_{k}(\x_{k}) \geq 0$ such that
\begin{equation} \label{eq:qsrCond}
    V_{k+1}(\x_{k+1})-V_{k}(\x_{k}) \leq 
     [\y_k^T \quad \u_k^T]
    \begin{bmatrix}
        \Q & \S^T \\
        \S    & \R
    \end{bmatrix}
    \begin{bmatrix}
        \y_k\\
        \u_k 
    \end{bmatrix}
\end{equation}
\end{definition}
Some important cases of the $(\Q, \S, \R)$ dissipative are found in passivity, finite-gain $\mathcal{L}_2$ stability. A passive system is dissipative with supply rate given by $\Q = \0, \R = \0$, and $\S =  \frac{1}{2} \I$.  An $\mathcal{L}_2$ system is dissipative with supply rate $\S = \0, \Q = \frac{1}{\xi} \I$, where $\xi$ is the $\mathcal{L}_2$ gain of the system.
When the input-output data $(\u_k,\y_k)$ are given and are modeled using RNN networks, the system dynamics is represented by the following equations:
\begin{equation} \label{eq:rnn}
    \h_{k+1} = f_{rnn}(\h_{k},\u_k, \theta_{rnn})
\end{equation}
\begin{equation} \label{eq:output}
    \hat{\y}_k = \C \h^{L}_{k} + \D \u_k + \b_y
\end{equation}
\begin{equation} \label{eq:nn1}
    \min_{\substack{\theta_{rnn}, \C, \D}} \quad \sum_{i=1}^{N} \mathcal{L}(\hat{\y}_i, \y_i)
\end{equation}
where $\theta_{rnn}$ are trainable weights of the RNN network with L layers, $\h_{k} = [{\h^{0}_{k}}^T \quad {\h^{1}_{k}}^T \quad  \cdots \quad {\h^{L}_{k}}^T]^T $, $\h^{i}_k \in \mathbb{R}^{q \times 1} \quad (i= 1, \cdots, L)$ are the vectors of hidden states, $\C \in \mathbb{R}^{p \times q}, \D \in \mathbb{R}^{p \times m}$, and $\hat{\y}_i, \y_i$ are the output of the model and the ground truth, respectively at the time step $i (i=1, \cdots, N)$, where N is the number of data samples. If we choose $\x_k = \h_k$, the Eqs.~(\ref{eq:rnn}-\ref{eq:output}) have the same form as Eqs.~(\ref{eq:ss1}-\ref{eq:ss2}). Note that for a discrete-time system to be dissipative, $D \neq \0$ is necessary~\cite{byrnes1994losslessness}.

\subsection{Dissipative Recurrent Neural Networks}

The model in Eq.~(\ref{eq:rnn}) can be expanded in the following form
\begin{equation}\label{eq:newnn1}
\begin{aligned}
\h^{1}_{k+1} &= \phi(\W^{1}_{ih} \u_k + \W^{1}_{hh} \h^{1}_{k} + \b^{1}_{h}) \\
\h^{2}_{k+1} &= \phi(\W^{2}_{ih} \h^{1}_{k+1}+\W^{2}_{hh} \h^{2}_{k} + \b^{2}_{h}) \\
&\vdots \\
\h^{L}_{k+1} &= \phi(\W^{L}_{ih} \h^{L-1}_{k+1}+\W^{L}_{hh} \h^{L}_{k}+ \b^{L}_{h})
\end{aligned}
\end{equation}
\begin{equation}
    \hat{\y}_k = \C \h^{L}_{k} + \D \u_k + \b_y
\end{equation}
where $\W^{1}_{ih} \in \mathbb{R}^{q \times m}$, ${\b^{i}_{h}}^T \in \mathbb{R}^{q \times q}, \W^{i}_{hh} \in \mathbb{R}^{q \times q} \quad (i=1, \cdots, L)$, \quad $\W^{i}_{ih} \in \mathbb{R}^{q \times q} \quad (i=2, \cdots, L)$ and $\B_h \in \mathbb{R}^{n \times q}$,$\W^{1}_{ih}  \in \mathbb{R}^{q \times m}$. $\phi(\cdot)$ is the activation function. 

Eq.~(\ref{eq:newnn1}) can be represented as follows
\begin{equation} 
    \h_{k+1} = \phi(\W_{u} \u_k + \W_{ih} \h_{k+1} + \W_{hh}\h_{k}+\b_{h})
\end{equation}
\begin{equation}
    \hat{\y}_k = \C_h \h_{k} + \D \u_k + \b_y
\end{equation}
Where 
\begin{equation} \label{eq:Whh}
\W_{hh} =    
\begin{bmatrix}
\W^{1}_{hh} & \0 & \cdots & \0 \\
\0 & \W^{2}_{hh} & \cdots & \0 \\
\vdots & \vdots & \ddots & \vdots \\
\0 & \0 & \cdots & \W^{L}_{hh} \\
\end{bmatrix} 
\in \mathbb{R}^{L q \times L q}
\end{equation}
%
%%%
\begin{equation} \label{eq:Wih}
\W_{ih} =    
\begin{bmatrix}
\0 & \0 & \cdots & \0 \\
\W^{2}_{ih} & \0 & \cdots & \0 \\
\vdots & \vdots & \ddots & \vdots \\
\0 & \0 & \W^{L}_{ih} & \0 \\
\end{bmatrix} 
\in \mathbb{R}^{L q \times L q}
\end{equation}
\begin{equation}
\W_u =    
\begin{bmatrix}
\W^{1}_{ih}  \\
\0 \\
\0 \\
\vdots \\
\0 \\
\end{bmatrix} 
\in \mathbb{R}^{Lq \times m}
\end{equation}
\begin{equation}
\C_h =    
\begin{bmatrix}
\C & \0 & \0 \cdots & \0
\end{bmatrix} \in \mathbb{R}^{p \times Lq}
\end{equation}
\begin{equation}
    \b_{h} = [{\b^{1}_{h}}^T \quad {\b^{2}_{h}}^T \quad  \cdots \quad {\b^{L}_{h}}^T]^T 
    \in \mathbb{R}^{Lq \times 1}
\end{equation}
\begin{equation}
    \b_{y} \in \mathbb{R}^{p \times 1}
\end{equation}
For a sequence length of T, the RNN network can be represented as
\begin{equation} \label{eq:ss1_2}
\begin{aligned}
\h_{k+1} &= \phi(\W_{u} \u_k + \W_{ih} \h_{k+1} + \W_{hh}\h_{k}+\b_{h}) \\
\h_{k+2} &= \phi(\W_{u} \u_{k+1} + \W_{ih} \h_{k+2} + \W_{hh}\h_{k+1}+\b_{h})\\
&\vdots \\
\h_{k+T} &= \phi(\W_{u} \u_{k+T} + \W_{ih} \h_{k+T} + \W_{hh}\h_{k+T-1}+\b_{h})
\end{aligned}
\end{equation}
Then
\begin{equation} \label{eq:ss2_2}
    \hat{\y}_{k+T-1} = \C_h \h^{L}_{k+T-1} + \D \u_{k+T-1} +\b_{y}
\end{equation}
Eqs.~(\ref{eq:ss1_2})-(\ref{eq:ss2_2}) can be converted into the form of Eqs.~(\ref{eq:ss1})-(\ref{eq:ss2}) as follows
\begin{equation}\label{eq:ss1_3}
\begin{aligned}
\Nu_{k+1}&= \phi_1(\W_{u} \u_k + \W_{ih} \Nu_{k+1} + \W_{hh}\Nu_{k}) \\
\Nu_{k+2}&= \phi_1(\W_{u} \u_{k+1} + \W_{ih} \Nu_{k+2} + \W_{hh}\Nu_{k+1}) \\
&\vdots \\
\Nu_{k+T} &= \phi_1(\W_{u} \u_{k+T} + \W_{ih} \Nu_{k+T} + \W_{hh}\Nu_{k+T-1})
\end{aligned}
\end{equation}
Then
\begin{equation} \label{eq:ss2_3}
    \hat{\y}_{k+T-1} = \C_h \Nu ^{L}_{k+T-1} + \D \u_{k+T-1} 
\end{equation}
%%%
where $\Nu_k = \h_k-\phi(\0), \phi_1 = \phi -\phi(\0),  \b_h = (\W_{ih}+\W_{hh})\phi(\0),$ $\b_y = \C_h\phi(\0),$
\begin{assumption} \label{assump1}
The activation function $\theta$ is piecewise differentiable and has slope restricted in $[\alpha = 0, \beta]$
    \begin{equation}
        \alpha = 0 \leq \frac{\phi(x_2)-\phi(x_1)}{x_2 - x_1} \leq \beta
    \end{equation}
\end{assumption}
This assumption holds for the most widely used activation functions, such as tanh, ReLU, and the sigmoid function. While the dissipativity condition proposed in this work can be easily extended for all finite values of $\alpha$ and $\beta$, the proposed training algorithm is restricted to the case $\alpha = 0$. The extension to cases with $\alpha \neq 0$ is left for future work.
\begin{proposition}
The systems described in Eqs.~(\ref{eq:ss1_3})-(\ref{eq:ss2_3}) will be $(\Q, \S, \R)$ dissipative if there is a positive definite matrix $\P = \P^ T$ such that
    \begin{equation} \label{eq:lmi}
        \begin{bmatrix}
        \P + \C_h^T \Q \C_h & -\gamma \W_{hh}^T\LambdaB  & \C_h^T \S^T + \C_h^T \Q \D \\
        -\gamma\LambdaB \W_{hh}    & \LambdaB  - \gamma \W^*_{ih} -\P & -\gamma \LambdaB  \W_u \\
        \S \C_h + \D^T \Q C_h & -\gamma \W_u^T\LambdaB  & \R_1
        \end{bmatrix} \\
    \end{equation}
is positive definite. 

where $\W^*_{ih} = \LambdaB \W_{ih} + \W_{ih}^T \LambdaB$, $\R_1 =  \R + \S \D + \D^T \S^T +  \D^T \Q \D$, $\LambdaB = diag(\lambda_1,  \lambda_2, \cdots, \lambda_{Lq})$ with $\lambda_i > 0 \quad (i=1, \cdots, Lq)$.
\end{proposition}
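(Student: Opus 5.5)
We would use the quadratic storage function $V_k(\Nu_k)=\Nu_k^T\P\Nu_k\ge 0$ and read the matrix in Eq.~(\ref{eq:lmi}) as a quadratic form in the stacked vector $\bm{\xi}_k=[\Nu_k^T\quad \Nu_{k+1}^T\quad \u_k^T]^T$. The idea is an S-procedure argument. First, write $\bm{\xi}_k^T M\bm{\xi}_k$ (with $M$ the matrix of Eq.~(\ref{eq:lmi})) as the difference between the supply rate and $\Delta V_k$, minus a nonnegative sector term generated by Assumption~\ref{assump1}. Then $M\succ 0$ gives the dissipation inequality Eq.~(\ref{eq:qsrCond}) at every step. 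The $T$-step description in Eqs.~(\ref{eq:ss1_3})--(\ref{eq:ss2_3}) is just this one-step map applied repeatedly, so one step suffices, and summing over steps gives the telescoped version. Well-posedness of the implicit update is not an issue: $\W_{ih}$ in Eq.~(\ref{eq:Wih}) is strictly block lower triangular, so $\Nu_{k+1}$ is computed layer by layer.

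\textbf{Step 1 (supply rate).} Substitute $\hat{\y}_k=\C_h\Nu_k+\D\u_k$ into the supply $\hat{\y}_k^T\Q\hat{\y}_k+2\u_k^T\S\hat{\y}_k+\u_k^T\R\u_k$ and expand. This yields
\[
\Nu_k^T\C_h^T\Q\C_h\Nu_k+2\Nu_k^T(\C_h^T\S^T+\C_h^T\Q\D)\u_k+\u_k^T\R_1\u_k .
\]
These are exactly the $(1,1)$ $\C_h$-part, the $(1,3)$ and $(3,1)$ blocks, and the $(3,3)$ block of $M$.

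\textbf{Step 2 (storage terms).} The remaining $+\P$ in block $(1,1)$ and $-\P$ in block $(2,2)$ contribute $\Nu_k^T\P\Nu_k-\Nu_{k+1}^T\P\Nu_{k+1}=-(V_{k+1}-V_k)$.

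\textbf{Step 3 (sector term).} Let $\bm{z}_k=\W_u\u_k+\W_{ih}\Nu_{k+1}+\W_{hh}\Nu_k$, so that $\Nu_{k+1}=\phi_1(\bm{z}_k)$ with $\phi_1(0)=0$. By Assumption~\ref{assump1} each component lies in the sector $[0,\beta]$, hence $\nu_i(z_i-\nu_i/\beta)\ge 0$. Weighting by $\lambda_i>0$ and summing gives
\[
\sigma_k:=2\Nu_{k+1}^T\LambdaB\bigl(\bm{z}_k-\tfrac{1}{\beta}\Nu_{k+1}\bigr)\ge 0 .
\]
Expanding $\sigma_k$ produces the cross terms $2\Nu_{k+1}^T\LambdaB\W_{hh}\Nu_k$ and $2\Nu_{k+1}^T\LambdaB\W_u\u_k$, the term $\Nu_{k+1}^T\W^*_{ih}\Nu_{k+1}$, and $-\tfrac{2}{\beta}\Nu_{k+1}^T\LambdaB\Nu_{k+1}$. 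Take $\gamma=\beta/2$, which is how we interpret the unspecified constant $\gamma$ in the statement. Then $-\gamma\sigma_k$ reproduces exactly the blocks $-\gamma\W_{hh}^T\LambdaB$, $-\gamma\LambdaB\W_u$, $-\gamma\W^*_{ih}$ and $+\LambdaB$ of $M$.

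\textbf{Conclusion.} Collecting Steps 1--3 gives the identity
\[
\bm{\xi}_k^TM\bm{\xi}_k=\text{supply}_k-(V_{k+1}-V_k)-\gamma\sigma_k .
\]
Since $M\succ0$ and $\gamma\sigma_k\ge0$, we obtain $V_{k+1}-V_k\le\text{supply}_k-\gamma\sigma_k-\bm{\xi}_k^TM\bm{\xi}_k\le\text{supply}_k$, which is Eq.~(\ref{eq:qsrCond}).

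\textbf{Main obstacle.} The delicate step is Step 3. The signs and scaling of the sector multiplier have to match the statement exactly, which forces $\gamma=\beta/2$; otherwise one must argue that $\LambdaB$ can be rescaled to absorb a different $\gamma>0$. One should also check that the shift to $\Nu_k$ and $\phi_1$ keeps the sector condition centred at the origin. This holds because $\phi_1$ inherits the slope bounds of $\phi$ and satisfies $\phi_1(0)=0$. Finally, the output in Eq.~(\ref{eq:ss2_3}) uses the last-layer block of $\Nu_k$; this is consistent with $\C_h\Nu_k$, since $\C_h$ simply selects one block of the state.
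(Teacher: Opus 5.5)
Your proof is correct and is essentially the paper's argument: it uses the quadratic storage $\Nu_k^T\P\Nu_k$, reads the LMI as a quadratic form in $[\Nu_k^T\ \Nu_{k+1}^T\ \u_k^T]^T$, and turns the slope restriction with $\alpha=0$ into the multiplier term with $\gamma=(\alpha+\beta)/2=\beta/2$, exactly as the paper defines $\gamma$ in its appendix; your single identity is a tidier packaging of the paper's two inequalities. One correction to your side remark: a different $\gamma$ cannot be absorbed by rescaling $\LambdaB$, because rescaling leaves the ratio between the $\LambdaB$ block and the $\gamma\LambdaB$ blocks unchanged; instead, your argument goes through verbatim for any $\gamma\ge\beta/2$ (the sector $[0,\beta]$ lies inside $[0,2\gamma]$), and the multiplier inequality is no longer guaranteed for $\gamma<\beta/2$.
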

\begin{proof}
    Consider the Lyapunov function as follows
\begin{equation} \label{eq:Vkz}
    V_k(\Nu_k) = \Nu_{k}^T\P\Nu_{k}
\end{equation}
Left and right multiplying Eq.~(\ref{eq:lmi}) by $[\Nu_k^T \quad \Nu_{k+1}^T \quad \u_k^T]$ and $[\Nu_k^T \quad \Nu_{k+1}^T \quad \u_k^T]^T$, where $\v_k = \W_{u} \u_k + \W_{ih} \Nu_{k+1} + \W_{hh}\Nu_{k}$, respectively, one can obtain [see Appendices~\ref{sec:provedeltaV}]. 
\begin{multline} \label{eq:deltaV}
    V_{k+1}(\Nu_{k+1})-V_{k}(\Nu_{k}) \leq 
     [\y_k^T \quad \u_k^T]
    \begin{bmatrix}
        \Q & \S^T \\
        \S    & \R
    \end{bmatrix}
    \begin{bmatrix}
        \y_k\\
        \u_k 
    \end{bmatrix}\\
    + [\v_k^T \quad \Nu_{k+1}^T]
    \begin{bmatrix}
        \0 & -\gamma \LambdaB \\
         -\gamma \LambdaB    & \LambdaB
    \end{bmatrix}
    \begin{bmatrix}
        \v_k\\
        \Nu_{k+1} 
    \end{bmatrix} \\
    \leq 
     [\y_k^T \quad \u_k^T]
    \begin{bmatrix}
        \Q & \S^T \\
        \S    & \R
    \end{bmatrix}
    \begin{bmatrix}
        \y_k\\
        \u_k 
    \end{bmatrix}
\end{multline}
Moreover, it can be shown that [see Appendices~\ref{sec:slopeproof}]
\begin{multline} \label{eq:slope0}
    [\v_k^T \quad \Nu_{k+1}^T]
    \begin{bmatrix}
        \0 & -\gamma \LambdaB \\
         -\gamma \LambdaB    & \LambdaB
    \end{bmatrix}
    \begin{bmatrix}
        \v_k\\
        \Nu_{k+1} 
    \end{bmatrix} 
    \leq 0
\end{multline}
From Eq.~(\ref{eq:deltaV}) and Eq.~(\ref{eq:slope0}), one has
\begin{multline}
    V_{k+1}(\Nu_{k+1})-V_{k}(\Nu_{k}) \leq 
     [\y_k^T \quad \u_k^T]
    \begin{bmatrix}
        \Q & \S^T \\
        \S    & \R
    \end{bmatrix}
    \begin{bmatrix}
        \y_k\\
        \u_k 
    \end{bmatrix}
\end{multline}
Which implies the network model is ($\Q, \S, \R$) dissipative.
\end{proof}
\subsection{Training algorithm for DissipNet} 
\label{sec:model}
The algorithm to find the weights of the network to guarantee the constraints in Eq.~(\ref{eq:lmi1}).

Step 1: Choose $\D$ such that $L = \R + \S \D + \D^T \S^T + \D^T \Q \D \succ 0$ [See Appendices~\ref{sec:D}]

If $\R_1 = \R + \S \D + \D^T \S^T + \D^T \Q \D \succ 0$, using Schur complement the LMI condition in Eq.~(\ref{eq:lmi}) can be rewritten as follows 
\begin{multline} \label{eq:lmi1}
\begin{bmatrix}
\P + \C_h^T \Q \C_h & -\gamma \W_{hh}^T\LambdaB  \\
-\gamma \LambdaB \W_{hh}    & \LambdaB  - \gamma \W^*_{ih} -\P \\
\end{bmatrix} -\\
 \begin{bmatrix}
\C_h^T \S^T + \C_h^T \Q \D \\
 -\gamma \LambdaB \W_u 
\end{bmatrix}
\R_1^{-1}
\begin{bmatrix}
\C_h^T \S^T + \C_h^T \Q \D \\
 -\gamma \LambdaB \W_u 
\end{bmatrix}^T 
\succ 0
\end{multline}

Step 2. Choose $\C_h, \B, \LambdaB \W_u$ as free variables, we will choose $\P, \LambdaB$ such that
\begin{multline} \label{eq:lmi2}
    \begin{bmatrix}
        \P + \C_h^T \Q \C_h & -\gamma\W_{hh}^T \LambdaB \\
        -\gamma\LambdaB  \W_{hh}    & \LambdaB  -\gamma \W^*_{ih} -\P \\
    \end{bmatrix} \succ \X^T \X \\
    + \begin{bmatrix}
    \C_h^T \S^T + \C_h^T \Q \D  \\
     -\gamma \LambdaB  \W_u 
    \end{bmatrix}
    \R_1^{-1}
    [\S \C_h + \D^T\Q \C_h  \quad  -\gamma \W_u^T \LambdaB ]
\end{multline}
Where $\X$ is a free variable. Denote
\begin{multline}
    \H = \begin{bmatrix}
    \H_{11} & \H_{12} \\
    \H_{21} & \H_{22}
    \end{bmatrix} = \X \X^T  \\
    + \begin{bmatrix}
    \C_h^T \S^T + \C_h^T \Q \D \\
     -\gamma\LambdaB \W_u 
    \end{bmatrix}
    \R_1^{-1}
    [\S \C_h + \D^T\Q\C_h   \quad  -\gamma \W_u^T\LambdaB]
\end{multline}
We bring another matrix $\H*$ that satisfies
\begin{multline}\label{eq:lmi0}
\H^{*}=
\begin{bmatrix}
    \H_{11} + \epsilon_1\I& \H_{12}^* + \Y\\
    \H_{12}^{*T} + \Y^T &  \H_{22}+ \epsilon_1\I 
    \end{bmatrix}
    \succ
    \begin{bmatrix} 
    \H_{11} & \H_{12} \\
    \H_{12}^T &  \H_{22} 
    \end{bmatrix}
\end{multline} 
where $\H_{12}^* $ is the matrix that is obtained from $\H_{12}$ 
by zeroing out all non-diagonal block matrices while preserving the diagonal block matrices in the same form as matrix $\W_{hh}$ in Eq.~(\ref{eq:Whh}). $\Y \in \mathbb{R}^{L q \times L q}$ is the trainable matrix that has the same form as $\W_{hh}$. For Eq.~(\ref{eq:lmi0}) to be satisfied, $\epsilon_1$ will be chosen such that
\begin{equation}
    \epsilon_1 \geq \norm{\H_{12}^* + \Y-\H_{12}}
\end{equation}

Finally, $\P$ and $\gamma \W_{hh}^T\LambdaB$ can be chosen as follows
\begin{equation} \label{eq:P}
    \P = \H_{11}-\C_h^T \Q \C_h + \epsilon_1\I \succ 0
\end{equation}

\begin{equation} \label{eq:Wz_cond2}
    -\gamma \W_{hh}^T\LambdaB  = \H_{12}^* + \Y %+ diag(ReLU(\gammab)) + diag(eig(\H_{12} - \H_{12}^*))
\end{equation}
where $diag(\cdot)$ and $ReLU(\cdot)$ are diagonal operator and Rectified Linear Unit (ReLU) function, respectively. %Where
It is easily seen that the Eqs.~(\ref{eq:P}-\ref{eq:Wz_cond2}) guarantee that 
\begin{multline} \label{eq:lmi3}
\begin{bmatrix}
        \P + \C^T \Q \C & - \gamma\W_{hh}^T\LambdaB  \\
        -\gamma\LambdaB \W_{hh}    & \LambdaB  - \gamma \W^*_{ih} -\P \\
    \end{bmatrix} \\
    \succ 
    \begin{bmatrix}
    \H_{11} + \epsilon_1\I& \H_{12}^* \\
    \H_{12}^{*T} &  \H_{22}+ \epsilon_1\I 
    \end{bmatrix}
\end{multline}
By choosing $\gamma \LambdaB \W_{ih}$ freely and $\LambdaB$ as in Eq.~(\ref{eq:lambda_cond2})
% \begin{equation} \label{eq:lambda_cond2}
%     \LambdaB  = diag(eig(\H_{22}+\P + \gamma \W^*_{ih}+ \epsilon_1\I))+ \epsilon_2\I
% \end{equation}
\begin{equation} \label{eq:lambda_cond2}
    \LambdaB =
    \left( \norm{ 
    \H_{22}+\P+\gamma\W^*_{ih}
    +\epsilon_1\I }
    +\epsilon_2
    \right)\I
\end{equation}
where $\epsilon_2 > 0.$
We can also obtain that 
\begin{equation} \label{eq:lmi4}
    \begin{bmatrix}
        \P + \C_h^T \Q \C_h & - \gamma\W_{hh}^T\LambdaB  \\
        -\gamma\LambdaB \W_{hh}    & \LambdaB  - \gamma \W^*_{ih} -\P \\
    \end{bmatrix} 
    \succ
    \begin{bmatrix}
    \H_{11} & \H_{12} \\
    \H_{21} & \H_{22}
    \end{bmatrix}
\end{equation}
Using Eq.~(\ref{eq:lambda_cond2}), matrices $\W_{hh}, \W_{ih}, \W_{u}$ then can be calculated, given $\LambdaB \W_{hh}, \LambdaB \W_{ih}, \LambdaB \W_{u}$.
From Eq.~(\ref{eq:lmi3}) and Eq.~(\ref{eq:lmi4}), Eq.~(\ref{eq:lmi1}) is satisfied.
\section{Case studies} \label{sec:casestudy}
In this section, the proposed DissipNet will be evaluated through two robotic modeling examples: a mass-spring-damper model and a two degree-of-freedom planar manipulator, which were both proven to be dissipative.
%%%%
\subsection{Mass-spring-damper system}
The equation of the model of a mass-spring-damper (MSD) model can be expressed as
\begin{equation}
    m\ddot{x}(t)+b\dot{x}(t)+kx(t) = f(t)
\end{equation}
Here $m, b, k$ are mass, damping, and spring coefficients, respectively. $f$ is the input of the system, and $x$ is the position of the mass.
The energy relation of the MSD system is as follows
\begin{multline}\label{eq:msd}
    \int_0^T f(t)\dot{x} = \frac{1}{2}kx^2(T) -\frac{1}{2}kx^2(0)+ \\
    \frac{1}{2}m\dot{x}^2(T)-\frac{1}{2}m\dot{x}^2(0)+\int_0^T b\dot{x}(t)dt
\end{multline} 
Eq.~(\ref{eq:msd}) implies that the continous system is dissipative with the following storage function and the supply rate
\begin{equation} \label{eq:storage_msd}
    V(x) = \frac{1}{2}kx^2(t) +\frac{1}{2}m\dot{x}^2(t)
\end{equation}
\begin{equation} \label{eq:supply_msd}
    w(u,y) = \begin{bmatrix}
        \y^T f
    \end{bmatrix}
    \begin{bmatrix}
        0 & 0   & 0 \\
        0 & -b  &\frac{1}{2}\\
        0 & \frac{1}{2} & 0
    \end{bmatrix}
    \begin{bmatrix}
        \y^T f
    \end{bmatrix}^T
\end{equation}
where $\y = \begin{bmatrix}
        x & \dot{x}
    \end{bmatrix}^T $
    
Parameters of the MSD model are follows:  m = 1 (kg), k = $4.2N/m$, b = $2.5Ns/m$, and the sampling time was 0.05s. 
\subsection{Two DOF manipulator}
Modeling of a planar two link manipulator is used to evaluate the performance of the DissipNet. The equation of motion of the manipulator can be expressed as follows~\cite{van2010consistent}
\begin{equation} \label{eq:2dof}
    \M(\q)\ddot{\q} + \C(\q, \dot{\q})\dot{\q} + \F_{friction} = \taub
\end{equation}
where $\q = [q_1 \quad q_2]^T$ denotes the joint angles of the manipulator.
\begin{align}
    \M(\q) = 
\begin{bmatrix}
    m_{11} & m_{12}\\
    m_{21} & m_{22}
\end{bmatrix}\\
    \C(\q,\dot{\q}) = 
\begin{bmatrix}
    c_{11} & c_{12}\\
    c_{21} & c_{22}
\end{bmatrix}\\
\F_{friction} = 
\begin{bmatrix}
    b_1 \dot{q}_1 & b_2 \dot{q}_2
\end{bmatrix}^T\\
    \taub= 
\begin{bmatrix}
    \tau_1 & \tau_2
\end{bmatrix}^T
\end{align}
$\M(\q), \C(\q, \dot{\q}), \F_{friction}, \taub$ are the mass matrix, Corriolis and Centrifugal matrix, friction torques, and the vector of applied torques to the joints, respectively. 
$m_{11} = m_1\frac{l_1^2}{4} + m_2(l_1^2 + l_2^2/4 + l_1 l_2cos(q_2)) + I_{z1} + I_{z2}$
$m_{12} = m_2(\frac{1}{4} + \frac{l_1 l_2}{4}cos(q_2)) + I_{z2}$, $m_{21} = m_{12}, m_{22} = m_2 \frac{l_2^2}{4}+I_{z2}$
$c_{11} = -m_2l_1l_2\dot{q}_2sin(q_2), c_{12} = -\frac{1}{2}l_1 l_2 \dot{q}_2 sin(q_2)$
$c_{21} = -\frac{1}{4}m_2 l_1 l_2 \dot{q}_2 sin(q_2) + \frac{1}{2}m_2 l_1 l_2 \dot{q}_1 sin(q_2)$,
$\c_{22} = -\frac{1}{4}m_2 l_1 l_2 \dot{q}_1 sin(q_2)$
\begin{multline}
\tau_1 = \tau_{m1} + g \left[ l_1\left( \frac{m_1}{2} + m_2 \right)\cos q_1 + \frac{m_2 l_2}{2} \cos(q_1 + q_2) \right] \\
- p_x \left[ l_1 \sin q_1 + l_2 \sin(q_1 + q_2) \right] +\\
p_y \left[ l_1 \cos q_1 + l_2 \cos(q_1 + q_2) \right], \\
\tau_2 = \tau_{m2} + \frac{1}{2} m_2 g l_2 \cos(q_1 + q_2) \\
- p_x l_2 \sin(q_1 + q_2) + p_y l_2 \cos(q_1 + q_2).
\end{multline}
%%%%%%%%%%%%%
Here, $m_i (i=1,2)$ denotes the mass of $i_{th}$ link; $l_i$ is the length of $i_{th}$ link, and $I_i$ is the moment of inertia of the link about the axis passing through the center of the mass and perpendicular to the plane of motion. $p_x$ and $p_y$ are the x and y components of the external forces, which are considered to be zero in this paper. $\tau_{m1}, \tau_{m2}$ are active joint torques. The parameters of the 2-dof manipulator were $m_1 = m_2 = 1 (kg)$, $l_1 = l_2 = 1 (m)$, $I_{z1} = I_{z2} = 0.1kgm^2$, $b_1 = b_2 = 0.01 Nms$. The sampling time was 0.05s. It can be shown that the input-output pair of the continuous system $\f$ and $\dot{\q}$ is passive\cite{albu2007unified}. 
\section{Evaluation results}
The DissipNet will be used to model the input-output relationship of the models described above. Moreover, we will compare the performance of our model with an RNN model without dissipativity property (Naive RNN) and a physics-informed neural network model (PINN)~\cite{cuomo2022scientific}. 

Here, the idea of the PINN modelling approach is to integrate the system's known dynamics by adding the physical model into the cost with a regularization coefficient.  In particular, the loss for the MSD and 2-DOF manipulator will be as in Eq.~\ref{eq:pinn_cost_msd} and Eq.~\ref{eq:pinn_cost_2dof}, respectively.
\begin{equation} \label{eq:pinn_cost_msd}
   \mathcal{L} = \sum_{i=1}^{N} \left[ \mathcal{L}(\y_i, \y^d_i) + \lambda (m \ddot{x_i} + b\dot{x_i} + kx_i - f_i)^2 \right]
\end{equation}
\begin{equation} \label{eq:pinn_cost_2dof}
    \mathcal{L} = \sum_{i=1}^{N} \left[ \mathcal{L}(\y_i, \y^d_i) + 
   \lambda \r_i^T \r_i  \right]
\end{equation} 
where $\r_i = \M(\q_i)\ddot{\q_i} + \C(\q_i, \dot{\q_i})\dot{\q_i} + \F_{friction, i} - \taub_i$

%%%
The dissipavitivy of the DissipNet model is evaluated using Eq.~(\ref{eq:qsrCond}) where
$V =  \Nu_k\P\Nu_k$ as shown in Eq.~(\ref{eq:qsrCond}). For the PINN model (continous model), the condition to show the dissipativity is
\begin{equation} \label{eq:dis_cont}
    \int_0^{T_k} yudt \geq V(T_k) - V(0)
\end{equation}
All simulations were conducted on a desktop workstation with an Intel(R) Core(TM) i9-9900K CPU @ 3.6\,GHz, 128\,GB RAM, and an NVIDIA GeForce RTX 2080 Ti GPU. 
For both the 2-DOF manipulator and the MSD systems, the RNN and DRNN models share the same architecture with a hidden dimension of 64, and 2 recurrent layers. 
The PINN baseline is a 4-layer feed-forward network with a hidden dimension of 128. 
All models were trained using the MSE loss and the Adam optimizer with a learning rate of $10^{-3}$ for 5000 and 10000 epochs for the 2-DOF manipulator and MSD systems, respectively.

\begin{figure}[H]
    \centering
    \includegraphics[width=0.5\columnwidth]{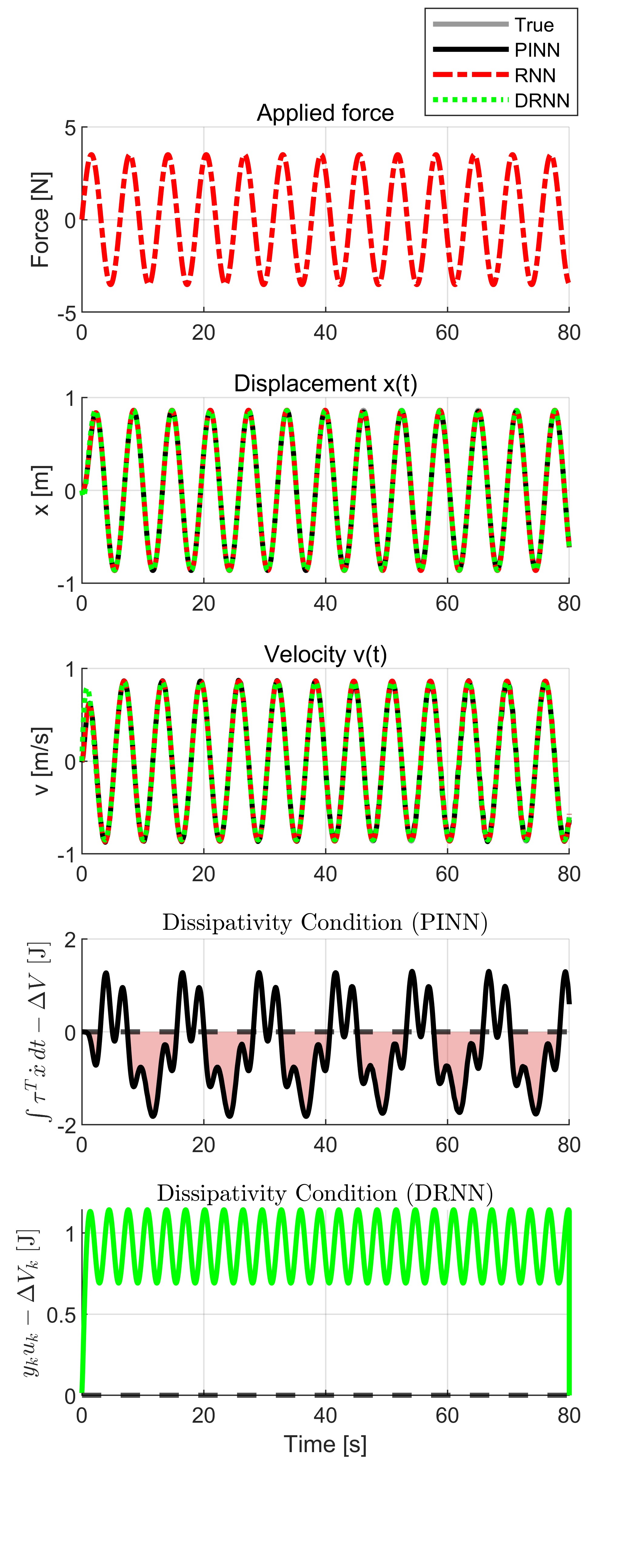}
    \caption{Comparison of DissipNet model (DRNN) and PINN model on a mass-spring-damper (MSD) model under an external input force. The second and third subplots from the top show that all RNN, DRNN and PINN models can capture the dynamics of the MSD system. However, the fourth subplot shows that the PINN model does not preserve dissipativity property, as opposed to the dissipativity property of the DNN model implied in the last subplot.  Here, the shaded color shows the area where the value is negative. Data from 0s to 40s were used for training, and data from 40s to 80s were used for testing.}
     \label{fig:drnn_pinn_msd}
\end{figure}
%%%
%
\begin{figure}[H] 
    \centering
    \includegraphics[width=0.5\columnwidth]{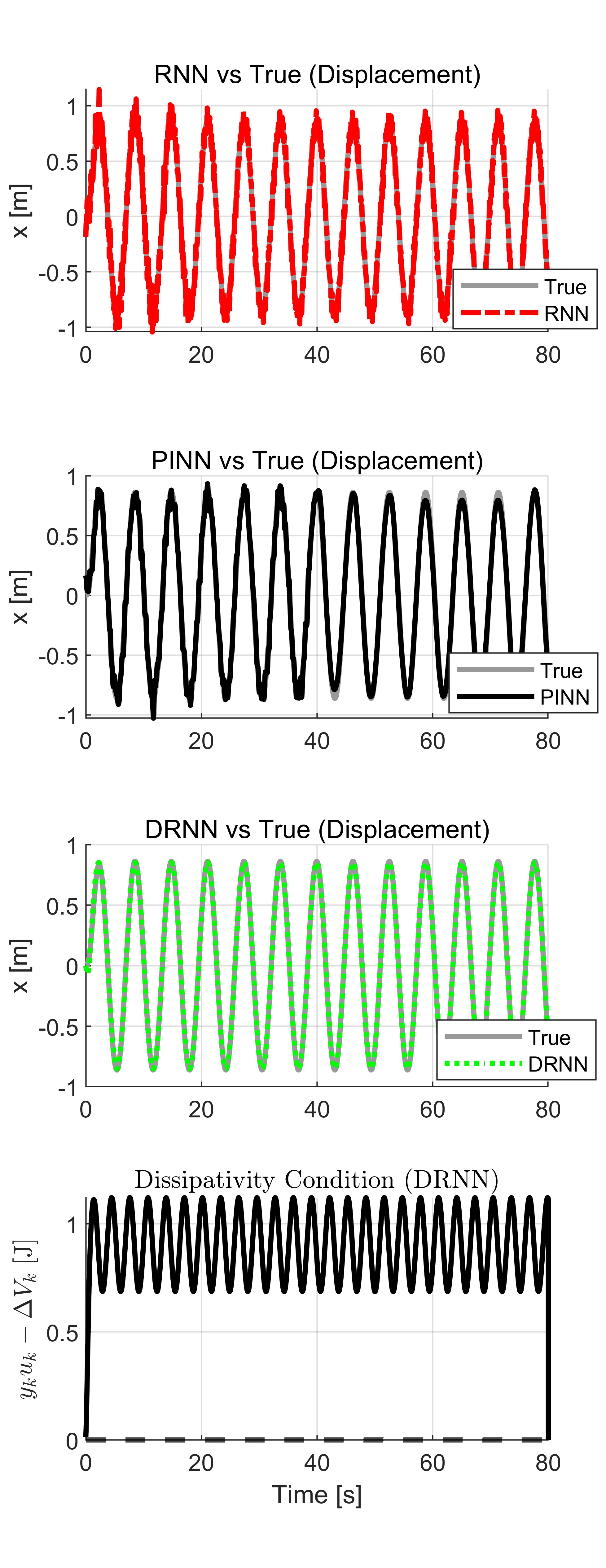}
    \caption{Comparison between RNN, DRNN and PINN models when the training data are subjected to noises. Here, Gaussian noise with zero mean and standard deviation $\sigma = 0.1$ is added to the outputs in the training. It is seen that the DRNN obtained better results compared with RNN and PINN while preserving dissipativity property, which indicates the robustness of the proposed DissipNet modelled when modeling dissipative systems. }
    \label{fig:drnn_rnn_msd_noise}
\end{figure}
\begin{figure}[H] 
    \centering
    \includegraphics[width=0.5\columnwidth]{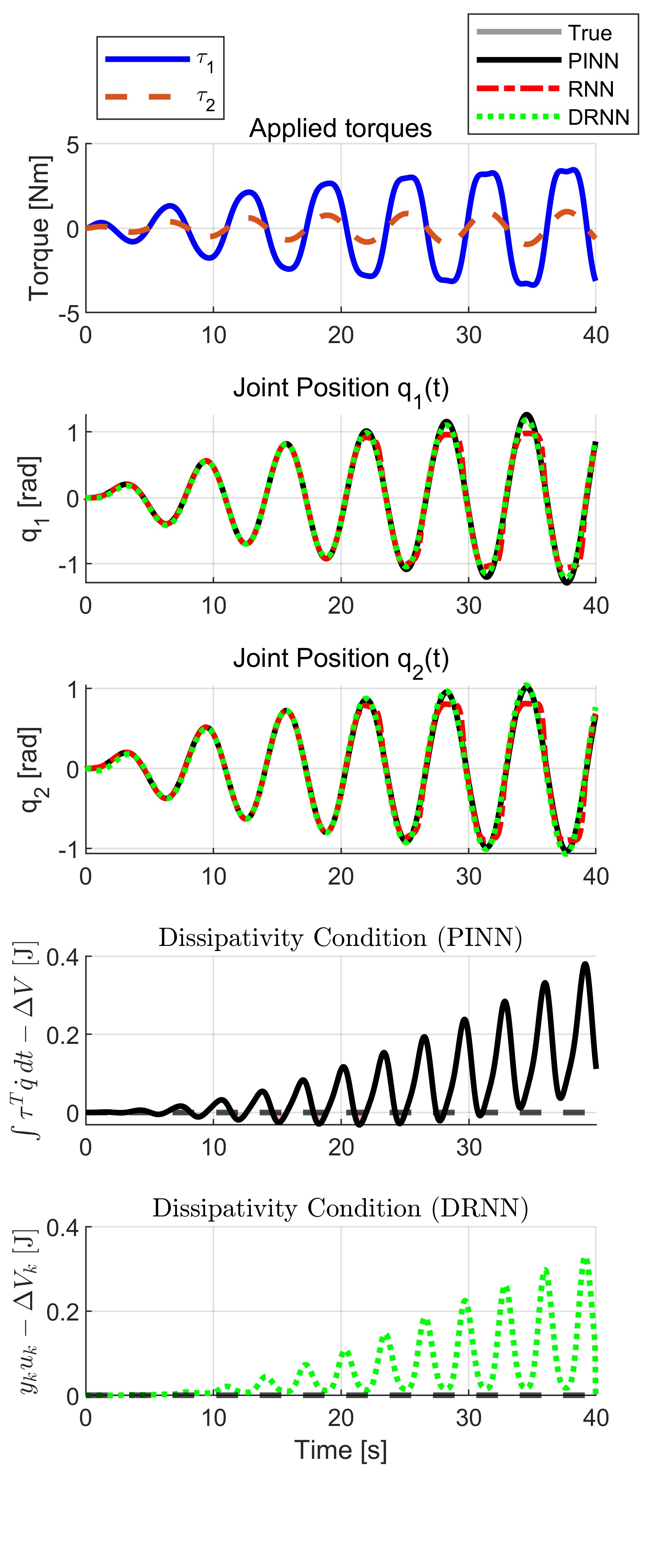}
    \caption{Comparison of the DissipNet model (DRNN), the Naive RNN model (RNN) and PINN model on the 2-dof manipulator modeling. The second and third subplots from the top show that all the naive RNN model, DRNN and PINN model can capture the dynamics of the system, however, the fourth subplot indicates that the PINN model is not dissipative as opposed to the DRNN model implied from the bottom subplot. Here, data from 0s to 20s were used for training, and data from 20s to 40s were used for testing.}
    \label{fig:drnn_rnn_2dof}
\end{figure}
\begin{figure}[H] 
    \centering
    \includegraphics[width=0.5\columnwidth]{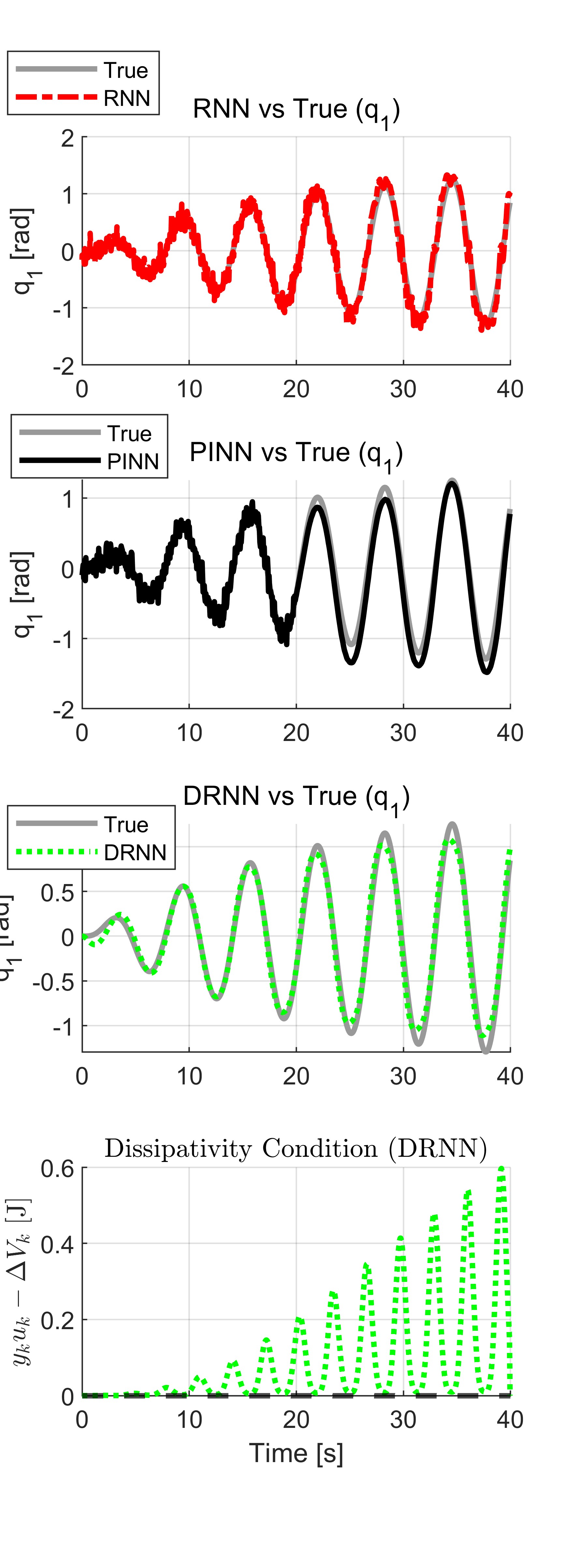}
    \caption{Comparison between RNN, DRNN, and PINN models when the training data are corrupted by noise (Gaussian noise with zero mean and standard deviation $\sigma = 0.1$). The results show that the proposed DRNN achieves superior prediction performance and exhibits lower sensitivity to noise compared with both RNN and PINN, indicating the robustness of the proposed DissipNet. Data from $0\,\mathrm{s}$ to $20\,\mathrm{s}$ are used for training, while data from $20\,\mathrm{s}$ to $40\,\mathrm{s}$ are used for testing.}
    \label{fig:drnn_rnn_2dof_noise}
\end{figure}
%
%%%%%%%%%%%%%%%%% 
Fig.~\ref{fig:drnn_pinn_msd} compares the proposed DissipNet (DRNN) with a naive RNN and a PINN on the MSD model. While all three networks can capture the system dynamics of the MSD system, notable differences arise in terms of energy behavior. In particular, the PINN does not preserve the dissipativity property, as evidenced by the corresponding energy evolution. In contrast, the energy plot of the proposed DRNN (bottom subplot) clearly demonstrates dissipative behavior, confirming that the dissipativity constraint is effectively enforced by the network architecture.
Fig.~\ref{fig:drnn_rnn_msd_noise} compares the performance of the models when the training data are corrupted by noise. The results show that the proposed DRNN achieves superior prediction accuracy with less noise sensitivity compared with both the RNN and PINN, while simultaneously preserving the dissipativity property.
Table~\ref{tab:msd_comparison} presents a quantitative comparison of the mean prediction errors and corresponding standard deviations obtained using RNN, PINN, and the proposed DRNN for the MSD model, under varying training noise conditions and numbers of training samples. In all evaluated scenarios, the proposed DRNN consistently achieves significantly lower prediction errors than the standard RNN, demonstrating the effectiveness of enforcing dissipativity constraints in improving learning stability and accuracy. For $N=200$ with training noise, DRNN reduces the mean error from $6.53\times10^{-2}$ to $1.12\times10^{-2}$, corresponding to an improvement of over $80\%$, while also exhibiting a markedly smaller variance. A similar trend is observed for $N=400$, where DRNN attains a mean error of $1.24\times10^{-2}$ compared to $6.59\times10^{-2}$ for RNN, indicating that the performance advantage of DRNN persists as the dataset size increases.

When compared with PINN, DRNN demonstrates superior robustness to training noise, particularly in the low-data regime. For $N=200$ with noise, DRNN achieves nearly a fivefold reduction in error relative to PINN, while for $N=400$ the improvement remains substantial. In the absence of training noise, both DRNN and PINN benefit from increased data availability; however, DRNN consistently attains the lowest error across all cases, reaching $6.0\times10^{-4}$ for $N=400$. These results collectively indicate that incorporating dissipativity directly into the network dissipativity directly into the network architecture leads to improved generalization and noise robustness compared with both purely data-driven RNNs and physics-informed PINNs in modeling MSD dynamics.

\begin{table}[t]
\centering
\caption{Comparing mean errors ($\pm$ std) of prediction performance using RNN, DRNN and PINN models for the MSD model (N is the number of training samples)}
\label{tab:msd_comparison}
\scriptsize
\setlength{\tabcolsep}{4pt}
\begin{tabular}{|c|c|c|c|c|}
\hline
\textbf{$N$} & \textbf{Training noise}
& \textbf{RNN ($\times 10^{-5}$)}
& \textbf{DRNN ($\times 10^{-5}$)}
& \textbf{PINN ($\times 10^{-5}$)} \\
\hline
\multirow{2}{*}{200}
& Yes
& $6.53 \pm 0.01$
& $1.12 \pm 0.00$
& $5.60 \pm 0.01$ \\
\cline{2-5}
& No
& $0.43 \pm 0.00$
& $0.07 \pm 0.00$
& $0.26 \pm 0.00$ \\
\hline
\multirow{2}{*}{400}
& Yes
& $6.59 \pm 0.00$
& $1.24 \pm 0.00$
& $4.47 \pm 0.00$ \\
\cline{2-5}
& No
& $0.30 \pm 0.00$
& $0.06 \pm 0.00$
& $0.07 \pm 0.00$ \\
\hline
\end{tabular}
\end{table}

%%
%%%
\begin{table}[t]
\centering
\caption{Comparing mean errors ($\pm$ std) of prediction performance using RNN, DRNN and PINN models for the 2DOF manipulator model (N is the number of training samples)}
\label{tab:2dof_comparison}
\scriptsize
\setlength{\tabcolsep}{4pt}
\begin{tabular}{|c|c|c|c|c|}
\hline
\textbf{$N$} & \textbf{Noise} 
& \textbf{RNN ($\times 10^{-4}$)} 
& \textbf{DRNN ($\times 10^{-4}$)} 
& \textbf{PINN ($\times 10^{-4}$)} \\
\hline
\multirow{2}{*}{200}
& Yes
& $4.60 \pm 3.49$
& $3.59 \pm 2.91$
& $1.39 \pm 1.92$ \\
\cline{2-5}
& No
& $3.60 \pm 3.28$
& $2.10 \pm 1.64$
& $0.56 \pm 1.23$ \\
\hline
\multirow{2}{*}{400}
& Yes
& $2.50 \pm 2.29$
& $1.66 \pm 1.43$
& $1.84 \pm 2.69$ \\
\cline{2-5}
& No
& $1.60 \pm 2.22$
& $0.76 \pm 0.57$
& $0.91 \pm 2.23$ \\
\hline
\end{tabular}
\end{table}
%%%
In Fig.~\ref{fig:drnn_rnn_2dof}, the proposed DRNN is compared with a naive RNN and a PINN on the modeling of a 2-DOF manipulator. Similar to the MSD system, the PINN fails to preserve the dissipativity property of the modeled system, in contrast to the proposed DRNN. Fig.~\ref{fig:drnn_rnn_2dof_noise} illustrates the performance of all models when the robot joint angles are corrupted by Gaussian noise. It is seen that DRNN is less sensitive to noise, and in both Fig.~\ref{fig:drnn_rnn_2dof} and Fig.~\ref{fig:drnn_rnn_2dof_noise}, the proposed DRNN consistently achieves lower prediction errors than both the RNN and PINN, while preserving dissipativity. This superior modeling performance is further confirmed by the quantitative results reported in Table~\ref{tab:2dof_comparison}.
Table~\ref{tab:2dof_comparison} compares the prediction error of RNN, PINN, and the proposed DRNN under different data lengths and noise conditions. Across all settings, DRNN consistently achieves lower mean error and reduced variance compared to the standard RNN, indicating improved accuracy and robustness due to the imposed dissipative structure. In particular, for $N=200$, with noise, DRNN reduces the error by approximately $22\%$ relative to RNN, while for $N=400$ the reduction increases to about $40\%$. Compared with PINN, DRNN performs competitively under noisy conditions and shows a clear advantage in noise-free cases, especially as the data length increases, where DRNN attains the lowest error among all methods. These results suggest that explicitly enforcing dissipativity leads to better generalization and noise robustness than both purely data-driven RNNs and physics-regularized PINNs. 

\section{Conclusions} \label{sec:conclusion}
In this work, we proposed a dissipative recurrent network model that can learn the dynamics and preserve the dissipative property of a known dissipative nonlinear model. This was realized through the design and learning algorithm of the network structure so that the weights satisfy dissipative constraints and they can be solved with non-constrained optimization method. The proposed network was evaluated and compared with a PINN model and a naive RNN on a mass-spring-damper system and a 2-dof planar manipulator, which are dissipative with particular input-output pairs. The results showed the superior performance of our dissipative RNN network compared to Naive RNN and PINN networks in guaranteeing both good modeling performance and physical meaning preservation. Moreover, it can be observed that, while PINN networks do not guarantee the preservation of the dissipativity property of the system, they also require prior knowledge of the system dynamics in the form of explicit equations, which is not needed in the proposed network. 

This is expected to provide a useful tool for designing a more effective and physically meaningful network applicable across a wide range of applications. Research on the modeling of the proposed model on more complex case studies, and improving the training time will be our future work. 

\appendix

\section[Proof of the dissipativity inequality]{\proofdeltaVtitle}
\label{sec:provedeltaV}
Left and right multiplying Eq.~(\ref{eq:lmi}) by $[\Nu_k^T \quad \Nu_{k+1}^T \quad \u_k^T]$ and $[\Nu_k^T \quad \Nu_{k+1}^T \quad \u_k^T]^T$ respectively, one can obtain
\begin{multline} \label{eq:qsr_expand}
    \Nu_k^T \P \Nu_k - \Nu_{k+1}^T \P \Nu_{k+1}  
    + \Nu_k^T \C_h^T \Q \C_h \Nu_k + 2\Nu_k^T \D^T \Q \C_h \u_k \\+ 2\Nu_k^T \C_h^T \S^T \u_k + \u_k^T \R_1\u_k 
     - 2 \gamma \v_k^T \LambdaB \Nu_{k+1}  + \Nu_{k+1}^T \LambdaB \Nu_{k+1} \geq 0 
\end{multline} 
taking into account that $\v_k = \W_{u} \u_k + \W_{ih} \Nu_{k+1} + \W_{hh}\Nu_{k}$, and $V_{k+1}(\Nu_{k+1})-V_{k}(\Nu_{k}) = \Nu_{k+1}^T \P\Nu_{k+1} - \Nu_{k}^T \P \Nu_{k}$, and 
\begin{multline} \label{eq:yu_expand}
    [\y_k^T \quad \u_k^T]
    \begin{bmatrix}
        \Q & \S^T \\
        \S    & \R
    \end{bmatrix}
    \begin{bmatrix}
        \y_k\\
        \u_k 
    \end{bmatrix} 
= \Nu_k^T \C_h^T \Q \C_h \Nu_k \\
+ 2\Nu_k^T \D^T \Q \C_h \u_k + 
2\Nu_k^T \C_h^T \S^T \u_k + \u_k^T \R_1 \u_k 
\end{multline}
where $\R_1 =  \R + \S \D + \D^T \S^T +  \D^T \Q \D$.

Eq.~(\ref{eq:qsr_expand}) then can be re-written as
\begin{multline}
    V_{k+1}(\Nu_{k+1})-V_{k}(\Nu_{k}) \leq 
     [\y_k^T \quad \u_k^T]
    \begin{bmatrix}
        \Q & \S^T \\
        \S    & \R
    \end{bmatrix}
    \begin{bmatrix}
        \y_k\\
        \u_k 
    \end{bmatrix}\\
    + [\v_k^T \quad \Nu_{k+1}^T]
    \begin{bmatrix}
        \0 & -\gamma \LambdaB \\
         -\gamma \LambdaB    & \LambdaB
    \end{bmatrix}
    \begin{bmatrix}
        \v_k\\
        \Nu_{k+1} 
    \end{bmatrix}
\end{multline}

\section[Proof of the slope-restriction inequality]{\proofSlopeTitle}
\label{sec:slopeproof}
The slopes of widely-used activation functions $\phi()$, such as ReLU, tanh, exponential linear functions, are restricted by the range [alpha, beta]. We  have
\begin{equation}
    \alpha \v_k \leq \Nu_{k+1} = \phi(\v_k)-\phi(\0)\leq \beta \Nu_k
\end{equation}
Therefore
\begin{equation}
    (\Nu_{k+1}-\alpha \v_k)\LambdaB ( \Nu_{k+1} -\beta \v_k)
\end{equation}

\begin{equation} \label{eq:slope}
    [\v_k^T \quad \Nu_{k+1}^T]
    \begin{bmatrix}
        \rho \LambdaB & -\gamma \LambdaB \\
         -\gamma \LambdaB    & \LambdaB
    \end{bmatrix}
    \begin{bmatrix}
        \v_k\\
        \Nu_{k+1}
    \end{bmatrix} \leq 0
\end{equation}
where, $\rho = \alpha \beta, \gamma = \frac{1}{2}(\alpha + \beta)$. From assumption~\ref{assump1}, Eq.~(\ref{eq:slope}) becomes
\begin{equation} \label{eq:slope_restricted}
    [\v_k^T \quad \Nu_{k+1}^T]
    \begin{bmatrix}
        \0 & -\gamma \LambdaB \\
         -\gamma \LambdaB    & \LambdaB
    \end{bmatrix}
    \begin{bmatrix}
        \v_k\\
        \Nu_{k+1}
    \end{bmatrix} \leq 0
\end{equation}
\subsection{Choose \texorpdfstring{$\D$}{D} such that} \label{sec:D}
\begin{equation} \label{eq:R}
    \R_1 = \R + \S \D + \D^T \S^T + \D^T \Q \D \succ 0
\end{equation}
It is noticed that Eq.~(\ref{eq:R}) will be satisfied if
\begin{equation} \label{eq:D_cond}
\L = \R + \S \D + \D^T \S^T + \D^T \Q_1 \D \succ 0 
\end{equation}
where $\Q_1 = \Q - \epsilon \I $ with $\I \in \mathbb{R}^{p \times p}$, and $\epsilon$ is a sufficiently small constant to guarantee that $\Q_1 \prec 0$. $\epsilon$ can be chosen as 0 in case $\Q \prec \0$. Then $\Q_1$ can be represented as $\Q_1 = -\L_q^T \L_q$.

$\D$ can be chosen as follow~\cite{revay2023recurrent}
\begin{equation} \label{eq:D_cond3}
    \D = -\Q_1^{-1} \S^T + \L_q^{-1} \N \L_r
\end{equation}
where $\L_r\in \mathbb{R}^{m \times m}$ is found from Eq.~(\ref{eq:Lr})
\begin{equation} \label{eq:Lr}
    \R -\S \Q_1 \S^{T} = \L_r^T \L_r
\end{equation}
%%%
$\N$ and $\M$ can be chosen as follows

if $p \geq m$
\begin{align}
\label{eq:MN}
    \M=\Y_1^T\Y_1 + \Y_2 - \Y_2^T + \Y_3^T\Y_3 + \epsilon_3 \I_m \\
    \N = \begin{bmatrix}
        (\I_m-\M)(\I_m+\M)^{-1}\\
        -2\Y_3(\I_m + \M)^{-1}
    \end{bmatrix}
\end{align} 
where $\Y_1\in \mathbb{R}^{m \times m}, \Y_2\in \mathbb{R}^{m \times m}$, and $\Y_3\in \mathbb{R}^{(p-m) \times m}$ are free variables, $\epsilon_3 > 0$, $\I_m$ is the identity matrix with size $ m \times m$.

if $p < m$
\begin{align}
\label{eq:MN_second_case}
    \M=\Y_1^T\Y_1 + \Y_2 - \Y_2^T + \Y_3^T\Y_3 + \epsilon_3 \I_p \\
    \N = \begin{bmatrix}
        (\I_m+\M)^{-1}(\I_m-\M)  \quad      -2(\I_m + \M)^{-1}\Y_3^T
    \end{bmatrix}
\end{align} 
where $\Y_1\in \mathbb{R}^{p \times p}, \Y_2\in \mathbb{R}^{p \times p}$, and $\Y_3\in \mathbb{R}^{(m-p) \times p}$ are free variables, $\I_p$ is the identity matrix with size $ p \times p$.

It can be proven that $\N\N^T \prec \I$ and $\R_1 = \L_r^T(\I - \N\N^T)\L_r \succ 0$. where $\I$ the identity matrix with the same size as $\N\N^T$.

\section*{Acknowledgment}
This material is based upon work supported by the Air Force Office of Scientific Research under award number FA2386-24-1-4039.

\bibliographystyle{ieeetr}
\bibliography{reference}

\end{document}